\newtheorem{theorem}{Theorem}
\newtheorem{lemma}{Lemma}
\newtheorem{corollary}{Corollary}
\newtheorem*{azuma}{Maximal Azuma-Hoeffding inequality}
\DeclareMathOperator*{\argmax}{arg\,max}
\begin{document}

\author{Indu~John,
        Chandramouli~Kamanchi
        and~Shalabh~Bhatnagar
\thanks{The authors are from the Department of Computer Science and Automation at the Indian Institute of Science, Bangalore 560012. E-mails: $\{$indu, chandramouli, shalabh$\}$@iisc.ac.in.}%
\thanks{Shalabh Bhatnagar was supported in part with projects from the Robert Bosch Centre for Cyber Physical Systems and Department of Science and Technology, and the J.C.Bose Fellowship.}%
}
\pagestyle{empty}
\title{Generalized Speedy Q-learning}

\maketitle
\thispagestyle{empty}

\begin{abstract}
In this paper, we derive a generalization of the Speedy Q-learning (SQL) algorithm that was proposed in the Reinforcement Learning (RL) literature to handle slow convergence of Watkins' Q-learning. In most RL algorithms such as Q-learning, the Bellman equation and the Bellman operator play an important role. It is possible to generalize the Bellman operator using the technique of successive relaxation. We use the generalized Bellman operator to derive a simple and efficient family of algorithms called  Generalized Speedy Q-learning (GSQL-$w$) and analyze its finite time performance. We show that GSQL-$w$ has an improved finite time performance bound compared to SQL for the case when the relaxation parameter $w$ is greater than 1. 
This improvement is a consequence of the contraction factor of the generalized Bellman operator being less than that of the standard Bellman operator. Numerical experiments are provided to demonstrate the empirical performance of the GSQL-$w$ algorithm. 
\end{abstract}

\begin{IEEEkeywords}
Machine learning, Stochastic optimal control, Stochastic systems.
\end{IEEEkeywords}

\IEEEpeerreviewmaketitle

\section{Introduction}
\IEEEPARstart{R}{einforcement} Learning (RL) is a paradigm in which an agent operating in a dynamic environment learns the best action sequence or policy to take in order to achieve the desired outcome. The interaction between the agent and the environment is modelled as an infinite horizon discounted reward Markov Decision Process (MDP). Watkins' Q-learning \cite{watkins1992q} is one of the most popular reinforcement learning algorithms. It computes an estimate of the optimal state-action value function or the Q-function in each iteration. It is shown in \cite{watkins1992q} that the sequence of estimates converges to the Q-function asymptotically. The convergence rate is however slow \cite{even2003learning,szepesvari1998asymptotic}, especially when the discount factor is close to $1$.

Speedy Q-learning (SQL) was proposed in \cite{azar2011speedy} to address the issue of slow convergence of Q-learning. At each iteration, the SQL algorithm uses two successive estimates of the Q-function and an aggressive learning rate in its update rule. This enables SQL to achieve faster convergence and a superior finite time bound on performance as compared to Q-learning. 

The Q-function is the fixed point of the Q-Bellman operator. A technique known as successive relaxation can be applied to generalize the Bellman operator \cite{reetz1973solution} with an additional parameter $w$. In this paper, we introduce an algorithm that generalizes Speedy Q-learning by computing the fixed point of the generalized Bellman operator. It is known that the fixed point of the generalized Bellman operator also yields an optimal policy of the MDP \cite{reetz1973solution}. Our algorithm is named Generalized Speedy Q-learning (GSQL) and it has an associated relaxation parameter $w$. We analyze the finite time performance of the algorithm in a PAC ("Probably Approximately Correct") framework. Convergence of the algorithm is guaranteed for $0 < w \leq w^*$, where $w^*$ depends on the underlying MDP. It is shown that for values of $w$ greater than 1, GSQL-$w$ is superior to Speedy Q-learning. Thus, we have a generalization of Speedy Q-learning with better finite time performance bound. 

The key idea is as follows. Consider MDPs with the special structure that for every action in the action space, there is a positive probability of self-loop for every state in the state space. This structure can be exploited using the technique of successive relaxation on the Bellman operator. For MDPs with this structure, one can choose $w$ such that the finite time performance of the proposed algorithm is superior to that of SQL. We also show numerical experiments to confirm our theoretical assertions. 
\subsection{Related work}\label{relatedwork}
After Watkins introduced the original Q-learning algorithm, several variants of the same have been proposed with different properties. For example, $Q(\lambda)$ \cite{peng1994incremental} is a parameterized variant that uses the concept of eligibility traces. Double Q-learning \cite{hasselt2010double} and Speedy Q-learning \cite{azar2011speedy} use two estimates of the Q-function, for addressing the issues of over-estimation and slow convergence, respectively. A multi-timescale version of the Q-learning algorithm is presented in \cite{shalabh} and its convergence shown using a differential inclusions based analysis. More recently, the Zap Q-learning algorithm was introduced \cite{devraj2017zap}, which is a matrix-gain algorithm designed to optimize the asymptotic variance.

Relaxation methods are iterative methods for solving systems of equations. A popular method is successive over-relaxation (SOR). SOR technique has been applied previously to solve an MDP when the model information is completely known \cite{reetz1973solution} and also in the setting of model-free reinforcement learning \cite{kamanchi2019successive}. The latter algorithm is known as SOR Q-learning. 

Although asymptotic convergence has been established for most of these algorithms, finite time behaviour which is important in practical applications is analyzed only for a few of them like Watkin's Q-learning \cite{even2003learning}, SARSA \cite{zou2019finite} and Speedy Q-learning (SQL) \cite{azar2011speedy}.

\subsection{Our Contributions}\label{contributions}
\begin{itemize}
    \item We generalize the Speedy Q-learning algorithm using the concept of successive relaxation to derive the GSQL-$w$ algorithm.
    \item We analyze the finite time performance of the GSQL-$w$ algorithm.
    \item We show that the generalization yields better bounds in the case $w>1.$
    \item We compare the empirical performance of GSQL-$w$ against similar algorithms in the literature.
\end{itemize}
\section{Background}
An RL problem can be modelled mathematically using the framework of Markov Decision Processes as described below.
A Markov Decision Process (MDP) is a 5-tuple $(S,A,P,R,\gamma)$, where $S$ is the set of states, $A$ is the set of actions, $P(j|i,a)$ is the transition probability from state $i$ to state $j$ when action $a$ is chosen, $R(i,a)$ is the reward obtained by taking action $a$ in state $i$ and $\gamma \in (0,1)$ is the discount factor. A policy $\pi$ is a mapping from states to actions. The goal is to find an optimal policy i.e., one that maximizes over all policies $\pi$ the expected long term discounted cumulative reward or value function given by \[ V^{\pi}(i)=\mathbb{E}\bigg{[}\sum_{t=0}^\infty \gamma^t R_t|s_0 =i\bigg{]}, \] where $s_0$ is the initial state and $R_t$ is the possibly random reward obtained at time $t$ with expected value $R(i,a)$ if the state at time $t$ is $i$ and the action chosen is $a$.

When the MDP model is completely known to the agent, numerical techniques such as value iteration and policy iteration are used to compute the optimal policy \cite{bertsekas1996neuro}. On the other hand, model-free reinforcement learning deals with the case where the agent learns to improve its behaviour based on its history of interactions with the environment. The agent does not have access to the full model, but has to learn from samples of the form 
$(s_{t}, a_{t}, R_{t}, s_{t+1})^{\infty}_{t=0}$ where $s_t$ is the current state at time $t$, $a_{t}$ is the action taken at time $t$ and $s_{t+1}$ is the next state observed after obtaining the reward $R_{t}$.

We assume that $S$ and $A$ are finite sets and the rewards $R(i,a)$ are all bounded by $R_{\max}$. Let $\beta \coloneqq \frac{1}{1-\gamma}$. Then, the long term discounted cumulative reward or value function is bounded by $V_{\max} \coloneqq \beta R_{\max}$. 

The generalized Bellman operator $H^w : \mathbb{R}^{S \times A} \rightarrow \mathbb{R}^{S \times A}$ is defined as \cite{kamanchi2019successive}
\begin{multline} \label{eq:gbo}
    (H^w Q)(i,a) := w(R(i,a) + \gamma \sum_{j \in S} P(j|i,a)\max_{b \in A} Q(j,b)) \\
    + (1-w) \max_{c \in A} Q(i,c) \text{ for } 0 < w \leq w^*.
\end{multline}
where $w^*\coloneqq \displaystyle\min_{i,a} \frac{1}{1-\gamma P(i|i,a)}$ (note that $w^* \geq 1$).

Let $Q^*$ be the unique fixed point  of $H^w$. The algorithm presented in the next section computes $Q^*$ iteratively, starting from an initial estimate $Q_0$. 
$Q^*$ is known \cite{kamanchi2019successive} to be such that an optimal policy $\pi^*$ is given by
\begin{align*}
  \pi^*(i)=\argmax_{a \in A} Q^*(i,a), \hspace{0.2cm}\forall i \in S,  
\end{align*}
and the corresponding optimal value function is given by
\begin{align*}
  V^*(i)=\max_{a \in A} Q^*(i,a), \hspace{0.2cm} \forall i \in S. 
\end{align*}

It is also proven, see \cite{kamanchi2019successive}, that $H^w$ is a max-norm contraction with contraction factor $(1-w+\gamma w)$, i.e., for $w \in (0,w^*]$ and $\gamma \in (0,1)$, it is shown that $1-w+\gamma w \in [0,1)$ and
\begin{equation*}
\|H^w P-H^w Q\| \leq (1-w+\gamma w) \|P-Q\|.
\end{equation*}
Throughout this paper, the $\|\cdot\|$ symbol is used to denote the max-norm, which is defined for a vector $x=(x_1,x_2,...,x_n)$ as $\|x\| \coloneqq \max(|x_1|,|x_2|,...,|x_n|)$.



\subsection{Speedy Q-learning}
The Speedy Q-learning algorithm also computes a state-action value function iteratively, according to the following update rule.
\begin{multline*}Q'_{n+1}(i,a) := Q'_n(i,a)+\alpha_n\big{(}H_n Q'_{n-1}(i,a)-Q'_n(i,a)\big{)}\\
+(1-\alpha_n)\big{(}H_n Q'_{n}(i,a)-H_n Q'_{n-1}(i,a)\big{)},
\end{multline*}
where $$H_n Q'_{n}(i,a) \coloneqq R(i,a)+ \gamma \displaystyle \max_{a \in A}Q'_{n}(j_n,a),$$ $$H_n Q'_{n-1}(i,a) \coloneqq R(i,a)+ \gamma \displaystyle \max_{a \in A}Q'_{n-1}(j_n,a),$$ $\alpha_n$ is the step-size and $j_n \sim P(\cdot|i,a)$.
It may be noted that the function $Q^*$ to which our algorithm converges could be different from the function $Q'$ to which Watkins' Q-learning or Speedy Q-learning algorithms converge $\big{(}$which is the fixed point of the Q-Bellman operator $H$ defined by $HQ(i,a) \coloneqq R(i,a)+\gamma \sum_{j \in S} P(j|i,a)\max_{b \in A} Q(j,b)\big{)}$. However, it has been established in \cite{kamanchi2019successive} that
\begin{equation}
    \max_{a \in A} Q^*(i,a) = \max_{a \in A} Q'(i,a), \hspace{0.2cm} \forall i \in S
\end{equation}
which shows that the same optimal value function is obtained from both $Q^*$ and $Q'$. 
\section{Generalized Speedy Q-learning}
In this section, we present our algorithm that we call Generalized Speedy Q-learning (GSQL). The algorithm integrates ideas from Speedy Q-learning and Generalized Bellman operator given by equation \eqref{eq:gbo} in its update rule. In addition to the initial state-action value function $Q_0$ and the discount factor $\gamma$, the algorithm takes as input a parameter $w \in (0,w^*]$ which we refer as the relaxation parameter. 
\subsection{Algorithm} \label{sec:algo}
The pseudo-code of the synchronous version of the algorithm is given in Algorithm \ref{alg:gsql}. The term `synchronous' means that the Q-values corresponding to all (state, action) pairs are updated in every iteration by generating next-state samples from the transition matrix $P$. The advantage of synchronous version is that it simplifies the analysis of the algorithm.

Before describing the algorithm, we define an auxiliary transition probability rule $\mu$ as follows.
 \begin{equation} \label{mudef}
    \mu(j|i,a)= 
\begin{cases}
    \frac{\gamma w P(j|i,a)}{1-w+\gamma w },& \text{if } j \neq i \vspace{0.1cm}\\ 
    \frac{1-w+\gamma w P(i|i,a)}{1-w+\gamma w },            &\text{if } j=i.
\end{cases}
\end{equation}
Note that the choice of $w \in (0, w^*]$ ensures that $\mu(\cdot|i,a)$ is a probability mass function.  
\begin{algorithm}
 \caption{Generalized Speedy Q-learning (GSQL-$w$)}
 \label{alg:gsql}
 \begin{algorithmic}[1]
 \renewcommand{\algorithmicrequire}{\textbf{Input:}}
 \vspace{0.1cm}
 \REQUIRE Initial action-value function $Q_0$, discount factor $\gamma$, relaxation parameter $w$, number of iterations $N$
 \STATE $Q_{-1}=Q_0$
  \FOR {$n=0,1,2,...,N-1$}
  \STATE $\alpha_n=\frac{1}{n+1}$
  \FOR{each $(i,a) \in S \times A$}
    \STATE Generate modified next-state sample $j_n' \sim \mu(\cdot | i,a)$ using the sample $j_n \sim P(\cdot | i,a)$ 
  \STATE $H_n^w Q_{n-1}(i,a)$ \\ \hspace{0.5cm}  $\coloneqq wR(i,a)+(1-w+\gamma w) \displaystyle \max_{a \in A}Q_{n-1}(j_n',a)$
  \STATE $H_n^w Q_{n}(i,a)$ \\
  \hspace{0.5cm} $\coloneqq wR(i,a) + (1-w+\gamma w) \displaystyle \max_{a \in A}Q_{n}(j_n',a)$
  \STATE $Q_{n+1}(i,a)$\\ \hspace{0.5cm}
  $\coloneqq Q_n(i,a)+\alpha_n\big{(}H_n^w Q_{n-1}(i,a)-Q_n(i,a)\big{)}$\\
  \hspace{1cm}$+(1-\alpha_n)\big{(}H_n^w Q_{n}(i,a)-H_n^w Q_{n-1}(i,a)\big{)}$
  \ENDFOR
  \ENDFOR
 \RETURN $Q_N$ 
 \end{algorithmic} 
\end{algorithm}

\textit{Remark.} Given a sample from $P(\cdot|i,a)$, it is possible to generate a sample from $\mu(\cdot|i,a)$. For $0<w\leq1$, acceptance-rejection sampling \cite{robert2013monte} can be used. When $w>1$, the techniques developed in \cite{nacu2005fast,mossel2005new} are applicable. We see that \cite{nacu2005fast} discusses a fast simulation algorithm to generate samples when there are two states and  \cite{mossel2005new} generalizes to multiple states.


The update rule of GSQL-$w$ involves two successive estimates of the Q function, similar to Speedy Q-learning. The key differences are (i) the generation of a modified next-state sample $j_n'$ instead of $j_n$ in Step 5 and (ii) the generalized empirical Bellman operator $H_n^w$ instead of $H_n$ in Steps 6 and 7. These steps ensure that the expected value of the empirical operator $H^w_n$ is equal to the generalized Bellman operator $H^w$ as formally proved in Section \ref{analysis}. Since the contraction factor of the Generalized Bellman operator is less than that of the standard Bellman operator for $w>1$ (see Subsection \ref{CSQL}), the rate of convergence is faster in this case.
 \subsection{Finite time PAC performance bound}
The main theoretical result in this paper is a PAC bound on the performance of the Generalized Speedy Q-learning algorithm, which is as follows.
\begin{theorem}\label{theorem1}
    Let $Q_N$ be the state-action value function returned by the GSQL-$w$ algorithm after $N$ iterations. Then, with probability at least $1-\delta$,
    \begin{multline*}
     \|Q_N-Q^*\| \leq \\
     \frac{2(1-w+\gamma w)\beta^2 R_{\max}}{wN} + \frac{2 \beta^2 R_{\max}}{w} \sqrt{\frac{2\log\frac{2|S||A|}{\delta}}{N}}.
     \end{multline*}
\end{theorem}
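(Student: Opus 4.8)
\medskip
\noindent\textbf{Proof plan.}
Write $\Lambda := 1-w+\gamma w\in[0,1)$ for the contraction factor of $H^w$ and let $\mathcal{F}_n$ be the $\sigma$-field generated by $Q_0$ and the modified samples $j_0',\dots,j_{n-1}'$, so that $Q_n,Q_{n-1}$ are $\mathcal{F}_n$-measurable while $j_n'$ is fresh. I would start from three facts. (i) \emph{Conditional unbiasedness}: $\mathbb{E}[H_n^w Q\mid\mathcal{F}_n]=H^w Q$ for every $\mathcal{F}_n$-measurable $Q$ --- this is exactly why the rule $\mu$ in \eqref{mudef} is defined the way it is, since a short computation gives $\Lambda\,\mathbb{E}_{j'\sim\mu(\cdot|i,a)}[\max_b Q(j',b)]=(1-w)\max_c Q(i,c)+\gamma w\sum_j P(j|i,a)\max_b Q(j,b)$, and adding $wR(i,a)$ reproduces $H^wQ(i,a)$. (ii) $\|Q^*\|\le V_{\max}$, because $\|Q^*\|=\|H^wQ^*\|\le wR_{\max}+\Lambda\|Q^*\|$ with $1-\Lambda=w(1-\gamma)$; the same estimate $\|H_n^wQ\|\le wR_{\max}+\Lambda\|Q\|$ propagates $\|Q_k\|\le V_{\max}$ along the iteration (taking $\|Q_0\|\le V_{\max}$), so empirical operator values and their centred versions are bounded in max-norm by $V_{\max}$, resp.\ $2\Lambda V_{\max}$. (iii) The identity $\tfrac{1}{1-\Lambda}=\tfrac{\beta}{w}$, i.e.\ $\tfrac{V_{\max}}{1-\Lambda}=\tfrac{\beta^2R_{\max}}{w}$, which is what turns contraction-gap factors into the constants of the statement.

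I would then unroll the update with the step size $\alpha_n=1/(n+1)$. Multiplying the update of Algorithm~\ref{alg:gsql} by $n+1$ collapses it to $(n+1)Q_{n+1}=nQ_n+nH_n^wQ_n-(n-1)H_n^wQ_{n-1}$; summing over $n=0,\dots,N-1$ (with $Q_{-1}=Q_0$) telescopes the $nQ_n$ terms into $NQ_N=\sum_{k=0}^{N-1}\big(kH_k^wQ_k-(k-1)H_k^wQ_{k-1}\big)$. Splitting $H_k^wQ_k=H^wQ_k+\eta_k$, $H_k^wQ_{k-1}=H^wQ_{k-1}+\zeta_k$ with $\eta_k,\zeta_k$ the noise terms (conditional mean zero given $\mathcal{F}_k$, bounded by $2\Lambda V_{\max}$), the deterministic part telescopes as well, leaving
\[
N(Q_N-Q^*)=(N-1)\big(H^wQ_{N-1}-H^wQ^*\big)+\big(H^wQ_0-Q^*\big)+M_N,\qquad M_N:=\sum_{k=0}^{N-1}\big(k\eta_k-(k-1)\zeta_k\big).
\]

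Taking max-norms and using $\|H^wQ_{N-1}-H^wQ^*\|\le\Lambda\|Q_{N-1}-Q^*\|$ gives, for every $n$, $n\|Q_n-Q^*\|\le\Lambda(n-1)\|Q_{n-1}-Q^*\|+\|H^wQ_0-Q^*\|+\|M_n\|$. Iterating this (set $x_n:=n\|Q_n-Q^*\|$, so $x_n\le\Lambda x_{n-1}+b_n$) and summing the geometric series in $\Lambda$ yields $\|Q_N-Q^*\|\le\tfrac{1}{N(1-\Lambda)}\big(\|H^wQ_0-Q^*\|+\max_{n\le N}\|M_n\|\big)$. For the first term, $\|H^wQ_0-Q^*\|=\|H^wQ_0-H^wQ^*\|\le\Lambda\|Q_0-Q^*\|\le 2\Lambda V_{\max}$, which by (iii) contributes precisely the bias term $\tfrac{2(1-w+\gamma w)\beta^2R_{\max}}{wN}$. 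For the second, each coordinate $M_n(i,a)$ is a real martingale, and --- granting the increment bound discussed next --- the Maximal Azuma--Hoeffding inequality applied coordinate-wise together with a union bound over the $|S||A|$ pairs $(i,a)$ gives, with probability at least $1-\delta$, a bound on $\max_{n\le N}\|M_n\|$ of the order $V_{\max}\sqrt{N\log(2|S||A|/\delta)}$; dividing by $N(1-\Lambda)$ and using (iii) once more produces the stochastic term $\tfrac{2\beta^2R_{\max}}{w}\sqrt{\tfrac{2\log(2|S||A|/\delta)}{N}}$, and adding the two pieces finishes the bound.

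The step I expect to be the main obstacle is precisely the increment bound for $M_N$: the naive per-coordinate bound on $k\eta_k-(k-1)\zeta_k$ grows linearly in $k$, which would ruin the $1/\sqrt N$ rate, so one must use the finer ``second difference'' structure $k\eta_k-(k-1)\zeta_k=k(\eta_k-\zeta_k)+\zeta_k$, observe that $\eta_k-\zeta_k=(H_k^wQ_k-H_k^wQ_{k-1})-(H^wQ_k-H^wQ_{k-1})$ has max-norm at most $2\Lambda\|Q_k-Q_{k-1}\|$, and separately prove $\|Q_k-Q_{k-1}\|=O(1/k)$ by a contraction-and-iteration argument on the increments $Q_k-Q_{k-1}$; this makes the increments of $M_N$ uniformly bounded so that Azuma applies with the right scaling. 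Establishing $\|Q_k\|\le V_{\max}$, the $O(1/k)$ control of successive iterates, and stitching the high-probability martingale bound to the deterministic iteration is where the bulk of the bookkeeping lies. None of this depends on $w$, so the whole improvement in the case $w>1$ comes from the single fact $\Lambda=1-w+\gamma w<\gamma$.
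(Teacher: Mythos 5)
Your plan follows the paper's proof almost step for step: the conditional unbiasedness of $H_n^w$ via the construction of $\mu$ is the paper's Lemma~1; your telescoped identity $NQ_N=\sum_{k=0}^{N-1}\bigl(kH_k^wQ_k-(k-1)H_k^wQ_{k-1}\bigr)$ and the resulting relation $N(Q_N-Q^*)=(N-1)(H^wQ_{N-1}-H^wQ^*)+(H^wQ_0-Q^*)\pm M_{N-1}$ are exactly the paper's Lemma~2; your recursion $x_n\le\Lambda x_{n-1}+b_n$ and the geometric-series bound reproduce Lemma~3 and its use in the final proof; and the maximal Azuma--Hoeffding inequality with a union bound over $|S||A|$ coordinates is applied identically. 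The deterministic term $\tfrac{2\Lambda\beta_1V_{\max}}{N}$ and the identity $\tfrac{1}{1-\Lambda}=\tfrac{\beta}{w}$ come out exactly as in the paper.

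The one place you genuinely diverge is the uniform bound on the martingale increments, and there your route is weaker than the paper's. You correctly identify that the naive bound on $k\eta_k-(k-1)\zeta_k$ grows with $k$, and you propose to fix it via $k(\eta_k-\zeta_k)+\zeta_k$ together with a separately proved estimate $\|Q_k-Q_{k-1}\|=O(1/k)$. This does give a uniform increment bound and hence the correct $1/\sqrt{N}$ rate, but if you track the constants, a self-contained induction for $\|Q_k-Q_{k-1}\|\le C/k$ forces $C$ of order $2\beta_1V_{\max}$, so your increment bound becomes roughly $2\gamma_1(2\beta_1+1)V_{\max}$ rather than $2V_{\max}$, and the stochastic term you obtain is larger than the stated $\tfrac{2\beta^2R_{\max}}{w}\sqrt{2\log(2|S||A|/\delta)/N}$ by a factor that blows up as $\gamma_1\to1$. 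The paper avoids this entirely with a single induction showing $\|D_n[Q_n,Q_{n-1}]\|\le V_{\max}$ for all $n$: writing $(n+1)\mathcal{M}Q_{n+1}-n\mathcal{M}Q_n=\mathcal{M}\bigl(nQ_n+D_n[Q_n,Q_{n-1}]\bigr)-\mathcal{M}(nQ_n)$ and using $\lvert\max_a(x_a+y_a)-\max_a x_a\rvert\le\max_a\lvert y_a\rvert$ gives $\|D_{n+1}\|\le w R_{\max}+\gamma_1\|D_n\|\le V_{\max}$, whence $\lvert m_n\rvert\le2V_{\max}$ exactly, with no need for the $O(1/k)$ control of successive iterates. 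If you want the theorem with its stated constants, replace your increment argument by this one; otherwise your proof establishes the bound only up to a $\gamma_1$-and-$\beta_1$-dependent constant factor in the second term.
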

The proof of Theorem \ref{theorem1} is given in Section \ref{analysis}.
\subsection{Comparison to Speedy Q-learning}\label{CSQL}
It is known that Speedy Q-learning \cite{azar2011speedy} has better sample complexity and computational complexity as compared to Watkins' Q-learning, with a space complexity of the same order. The finite time PAC bound  for Speedy Q-learning is as below.
$$ \|Q'_N-Q'\| \leq \frac{2\gamma \beta^2 R_{\max}}{N} + 2 \beta^2 R_{\max} \sqrt{\frac{2\log\frac{2|S||A|}{\delta}}{N}}.$$
There are two cases to consider, depending on the possible choice of $w$.
\begin{enumerate}
    \item $0 < w \leq 1$ (Under-relaxation) : \\
    In this case, $(1-w+\gamma w) \geq \gamma$ and $\frac{\beta}{w} \geq \beta$.
    \item $1 < w \leq w^*$ (Over-relaxation) : \\
    In this case, $(1-w+\gamma w) < \gamma$ and $\frac{\beta}{w} < \beta$. 
\end{enumerate}

It is seen that the bound for GSQL-$w$ is better than that of SQL for the case $w>1$ $\big{(}$The choice $w>1$ is allowed whenever $P(i|i,a)>0$ for all $(i,a)\big{)}$. For the second term in the bound, which is the dominating term, the improvement is by a factor of $w$. Moreover, the space complexity and computational complexity of our algorithm are the same as those of SQL.


\section{Theoretical analysis}
\label{analysis}
In this section, we provide a proof of Theorem \ref{theorem1}, which also implies convergence of the algorithm. 
 
To simplify the notation, let $\gamma_1 \coloneqq 1-w+\gamma w$ and $\beta_1 \coloneqq \frac{1}{1-\gamma_1}=\frac{\beta}{w}$. Recall that $\gamma_1 \in [0,1) \leq \gamma$ for $w \in [1,w^*]$. We define $(\mathcal{M}Q)(i)\coloneqq \max_{a \in A}Q(i,a)$.

The operators $H^w$ and $H^w_n$ were defined earlier. Define the operator $D_n[Q_n,Q_{n-1}]$ as
\begin{equation*} \label{eqstar}
    D_n[Q_n,Q_{n-1}](i,a) \coloneqq n H^w_n Q_n(i,a) - (n-1) H^w_n Q_{n-1}(i,a),
\end{equation*}
and $(\mathcal{M}D_n[Q_n,Q_{n-1}])(i)\coloneqq \max_{a \in A}D_n[Q_n,Q_{n-1}](i,a)$.
Let $\mathcal{F}_n$ be the $\sigma$-algebra generated by the sequence of random variables $\{j_1,j_2,...,j_n\}$. Observe that the sequence $\{\mathcal{F}_n\}$ is a filtration. We define the operator $D[Q_n,Q_{n-1}]$ as follows.
\begin{equation*}
    D[Q_n,Q_{n-1}](i,a) \coloneqq \mathbb{E}[D_n[Q_n,Q_{n-1}](i,a) | \mathcal{F}_{n-1} ].
\end{equation*}
The update rule of the GSQL algorithm can now be rewritten as below.
\begin{flalign}
    &Q_{n+1}(i,a) \nonumber \\
    &=(1-\alpha_n)Q_n(i,a) + \alpha_n D_n[Q_n,Q_{n-1}](i,a) \nonumber\\ 
    &=(1-\alpha_n)Q_n(i,a) + \alpha_n(D[Q_n,Q_{n-1}](i,a) - m_n(i,a)) \label{basecase}
\end{flalign}
where $m_n(i,a)\coloneqq D[Q_n,Q_{n-1}](i,a)-D_n[Q_n,Q_{n-1}](i,a)$.
Note that the sequence $\{m_n\}$ is a martingale difference sequence with respect to the filtration $\{\mathcal{F}_n\}$. Define 
\begin{equation*}
    M_n(i,a) \coloneqq \sum_{k=0}^n m_k(i,a).
\end{equation*}

Let $Q_n \coloneqq \left(Q_n(i,a),(i,a) \in S \times A\right)$. Similarly, let $R \coloneqq \left(R(i,a),(i,a) \in S \times A\right)$, $\mathcal{M}Q \coloneqq \left(\mathcal{M}Q(i), i \in S\right)$, $D_n[Q_n,Q_{n-1}] \coloneqq \left(D_n[Q_n,Q_{n-1}](i,a),(i,a) \in S \times A\right)$ and $M_n \coloneqq$ $\left(M_n(i,a), (i,a) \in S \times A\right)$.

We prove the theorem in the following steps. 
\begin{enumerate}
    \item \textbf{Lemma 1} shows that the expected value of the generalized empirical Bellman operator $H^w_n$ is equal to the generalized Bellman operator $H^w$.
    \item In \textbf{Lemma 2}, the update rule is rewritten in terms of the operator $H^w$ and an error term.
    \item \textbf{Lemma 3} provides a bound on $\|Q^*-Q_n\|$ in terms of a discounted sum of error terms $\{M_k\}_{k=0}^{n-1}$.
    \item We state the maximal Azuma-Hoeffding inequality for martingale difference sequences and apply it to bound $M_k$'s.
    \item Finally, by combining the steps above, we derive the finite time performance bound for Generalized Speedy Q-learning.
\end{enumerate}  
\begin{lemma}
    $\mathbb{E}[H_n^w Q_n(i,a)] = H^w Q_n(i,a)$.
\begin{proof}
    \begin{align} 
        & ~ \mathbb{E}[H_n^w Q_n(i,a)] \nonumber \\
        &=\mathbb{E}\left[wR(i,a)+\gamma_1 \mathcal{M}Q_n(j_n')\right] \nonumber \\
        &=wR(i,a)+\gamma_1 \sum_{j \in S} \mu(j|i,a)\mathcal{M}Q_n(j)\label{eq1}\\
        &=wR(i,a) + \gamma_1 \sum_{j \in S, j \neq i} \frac{\gamma w P(j|i,a)}{\gamma_1} \mathcal{M}Q_n(j) \nonumber \\
        & \hspace{3cm}+\gamma_1 \frac{1-w+\gamma w P(i|i,a)}{\gamma_1} \mathcal{M}Q_n(i) \nonumber \\ 
        &=wR(i,a)+\gamma w \sum_{j \in S}P(j|i,a)\mathcal{M}Q_n(j) +(1-w)\mathcal{M}Q_n(i) \nonumber\\
        &=H^w Q_n(i,a). \nonumber
        \end{align}
Here, equation \eqref{eq1} is obtained from the previous step using the fact that $j_n' \sim \mu$.
\end{proof}
\begin{corollary}
    \begin{equation*}
    D[Q_n,Q_{n-1}](i,a)= n H^w Q_n(i,a) - (n-1) H^w Q_{n-1}(i,a).
    \end{equation*}
\end{corollary}
\begin{proof}
Follows from the definitions of the operators $D,D_{n}$ and $H^w$.
\end{proof}
\end{lemma}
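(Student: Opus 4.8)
The plan is to unfold the definition of the operator $D$ as a conditional expectation and then invoke Lemma 1 termwise. Recall that $D[Q_n,Q_{n-1}](i,a) = \mathbb{E}[D_n[Q_n,Q_{n-1}](i,a) \mid \mathcal{F}_{n-1}]$ and that, by definition, $D_n[Q_n,Q_{n-1}](i,a) = n H^w_n Q_n(i,a) - (n-1) H^w_n Q_{n-1}(i,a)$. Since the multipliers $n$ and $n-1$ are deterministic constants, linearity of conditional expectation gives
$$D[Q_n,Q_{n-1}](i,a) = n\,\mathbb{E}[H^w_n Q_n(i,a) \mid \mathcal{F}_{n-1}] - (n-1)\,\mathbb{E}[H^w_n Q_{n-1}(i,a) \mid \mathcal{F}_{n-1}].$$

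The identity I actually need is the conditional version of Lemma 1, namely $\mathbb{E}[H^w_n Q_n(i,a) \mid \mathcal{F}_{n-1}] = H^w Q_n(i,a)$ and likewise for $Q_{n-1}$. The point to check is measurability: the iterates $Q_n$ and $Q_{n-1}$ are built from the samples $j_1,\dots,j_{n-1}$ and are therefore $\mathcal{F}_{n-1}$-measurable, whereas the modified sample $j_n'$ driving $H^w_n$ is drawn from $\mu(\cdot\mid i,a)$ independently of $\mathcal{F}_{n-1}$. Hence conditioning on $\mathcal{F}_{n-1}$ treats $Q_n$ and $Q_{n-1}$ as fixed and averages only over $j_n'$, so the computation carried out in Lemma 1 — expanding $\mathbb{E}[\mathcal{M}Q(j_n')]$ against $\mu$ and simplifying through the definition \eqref{mudef} of $\mu$ — goes through verbatim with $Q_n$ (respectively $Q_{n-1}$) in place of the generic argument.

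Substituting these two conditional expectations into the displayed expression immediately yields $D[Q_n,Q_{n-1}](i,a) = n H^w Q_n(i,a) - (n-1) H^w Q_{n-1}(i,a)$, which is the claim. The argument is essentially bookkeeping, so I do not anticipate a genuine obstacle; the only point requiring care is the measurability remark above, i.e. justifying that Lemma 1 upgrades to a conditional statement given $\mathcal{F}_{n-1}$ precisely because $j_n'$ is independent of the past while the iterates are past-measurable.
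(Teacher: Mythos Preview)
Your proposal is correct and follows the same approach as the paper, which simply says the corollary ``follows from the definitions of the operators $D$, $D_n$ and $H^w$.'' Your write-up is in fact more careful than the paper's: you make explicit the measurability point that $Q_n,Q_{n-1}$ are $\mathcal{F}_{n-1}$-measurable while $j_n'$ is fresh, so that Lemma~1 upgrades to the conditional identity actually needed.
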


\begin{lemma}\label{lemma2}
    $Q_n=\frac{1}{n}(H^w Q_0 + (n-1) H^w Q_{n-1} - M_{n-1})$  $\forall n \geq 1$.
\begin{proof}
    We prove the result by induction. Recall that $\alpha_n=\frac{1}{n+1}$. The base case ($k=1$) is the same as Equation (\ref{basecase}). Let the result hold for $n$. Then, we can see that it holds for $n+1$, since,
    \begin{align}
        Q_{n+1} &= \nonumber \\
        &\frac{n}{n+1}Q_n +\frac{1}{n+1}(n H^w Q_n-(n-1)H^w Q_{n-1}-m_n) \label{step 1}\\
      &=\frac{n}{n+1}\left(\frac{1}{n}\left(H^w Q_0 + (n-1) H^w Q_{n-1} - M_{n-1}\right)\right)\nonumber \\
       &+\frac{1}{n+1}\left(n H^w Q_n-(n-1)H^w Q_{n-1}-m_n\right) \label{step 2}\\ \nonumber
       &=\frac{1}{n+1}(H^w Q_0 + n H^w Q_{n} - M_{n-1}-m_n)\nonumber \\
       &=\frac{1}{n+1}(H^w Q_0 + n H^w Q_{n} - M_n). \nonumber
    \end{align}
Note that equation \eqref{step 1} follows from  \eqref{basecase} and equation \eqref{step 2} is obtained from  \eqref{step 1} by utilizing induction hypothesis. Thus, the result holds for all $n \geq 1$.
\end{proof}

\end{lemma}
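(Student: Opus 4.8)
The plan is to establish the identity by induction on $n$, relying on two facts proved just above: the rewritten update rule \eqref{basecase}, which expresses $Q_{n+1}$ as $(1-\alpha_n)Q_n + \alpha_n\bigl(D[Q_n,Q_{n-1}] - m_n\bigr)$, and the Corollary to Lemma~1, which evaluates the conditional mean $D[Q_n,Q_{n-1}] = n\,H^w Q_n - (n-1)\,H^w Q_{n-1}$. The remaining ingredients are purely bookkeeping: the prescribed step size $\alpha_n = \frac{1}{n+1}$, the initialization convention $Q_{-1}=Q_0$, and the partial-sum relation $M_n = M_{n-1} + m_n$ coming from $M_n = \sum_{k=0}^n m_k$. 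This is the direct analogue of the corresponding step in the Speedy Q-learning analysis \cite{azar2011speedy}, with $H^w$ and $\gamma_1$ playing the roles of $H$ and $\gamma$.

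For the base case $n=1$ I would specialize \eqref{basecase} to index $0$. Since $\alpha_0 = 1$, it collapses to $Q_1 = D[Q_0,Q_{-1}] - m_0$, and the Corollary at $n=0$ together with $Q_{-1}=Q_0$ gives $D[Q_0,Q_{-1}] = H^w Q_0$. Because $M_0 = m_0$, we get $Q_1 = H^w Q_0 - M_0$, which is exactly the claimed formula at $n=1$ (the term $(n-1)H^w Q_{n-1}$ vanishing).

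For the inductive step, suppose the identity holds at $n$. Applying \eqref{basecase} at index $n$ and substituting $\alpha_n = \frac{1}{n+1}$ and $D[Q_n,Q_{n-1}] = n H^w Q_n - (n-1)H^w Q_{n-1}$ gives
\[
Q_{n+1} = \frac{n}{n+1}\,Q_n + \frac{1}{n+1}\bigl(n H^w Q_n - (n-1)H^w Q_{n-1} - m_n\bigr).
\]
Now I would insert the induction hypothesis $Q_n = \frac{1}{n}\bigl(H^w Q_0 + (n-1)H^w Q_{n-1} - M_{n-1}\bigr)$: the prefactor $\frac{n}{n+1}\cdot\frac{1}{n}$ reduces to $\frac{1}{n+1}$, the two occurrences of $(n-1)H^w Q_{n-1}$ cancel, and collecting terms yields $Q_{n+1} = \frac{1}{n+1}\bigl(H^w Q_0 + n H^w Q_n - M_{n-1} - m_n\bigr)$. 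Using $M_{n-1} + m_n = M_n$ rewrites this as $\frac{1}{n+1}\bigl(H^w Q_0 + n H^w Q_n - M_n\bigr)$, which is the claimed identity at $n+1$, closing the induction.

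There is no deep obstacle here; the whole argument is a routine induction. The only spot that warrants care is the base case, where one must correctly handle the degenerate instances — $\alpha_0 = 1$, the Corollary at $n=0$ whose coefficients $n$ and $n-1$ take the values $0$ and $-1$, the convention $Q_{-1}=Q_0$, and the fact that $M_0 = m_0$ — after which the inductive step is just a matter of keeping the index bookkeeping straight so that the $H^w Q_{n-1}$ terms cancel and the martingale partial sums $M_{n-1}$, $m_n$, $M_n$ line up.
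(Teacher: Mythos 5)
Your proof is correct and follows essentially the same route as the paper: induction using the rewritten update rule \eqref{basecase}, the step size $\alpha_n=\frac{1}{n+1}$, the corollary $D[Q_n,Q_{n-1}]=nH^wQ_n-(n-1)H^wQ_{n-1}$, and the relation $M_n=M_{n-1}+m_n$. Your treatment of the base case is in fact more explicit than the paper's (which simply asserts it coincides with \eqref{basecase}), but the argument is the same.
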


\begin{lemma}\label{lemma3}
    Assume that the initial state-action value function $Q_0$ is uniformly bounded by $V_{\max}$. Then, for all $n \geq 1$,
    \begin{equation*}
        \|Q^*-Q_n\| \leq \frac{2\gamma_1 \beta_1 V_{\max}}{n}+\frac{1}{n} \sum_{k=1}^{n}\gamma_1^{n-k}\|M_{k-1}\|.
    \end{equation*}
\begin{proof}
    We use induction to prove this result as well. The statement of the lemma holds for $n=1$, since,
    \begin{align*}
            & ~ \|Q^*-Q_1\| \\
            &= \|H^w Q^* - H^w_0 Q_0\| = \|H^w Q^* - H^w Q_0 + m_0\| \\
            &\leq\|H^w Q^* - H^w Q_0\| + \|m_0\| \leq \gamma_1 \|Q^* - Q_0\| + \|m_0\| \\
            & \leq 2\gamma_1 V_{\max} + \|m_0\| \leq 2\gamma_1 \beta_1 V_{\max} + \|M_0\|.
    \end{align*}
    Suppose the result holds for $n$. Then, using \textbf{Lemma \ref{lemma2}},
    \begin{align}
            & \|Q^*-Q_{n+1}\| \nonumber \\ 
            = & \bigg{\|}Q^*-\frac{1}{n+1}(H^w Q_0 + nH^w Q_{n} - M_{n})\bigg{\|} \label{line1} \\
            = & \bigg{\|}\frac{1}{n+1}\left(H^w Q^* -H^w Q_0\right) \nonumber \\ & \hspace{1cm}+\frac{n}{n+1}\left(H^w Q^* -H^w Q_n\right) + \frac{1}{n+1}M_n\bigg{\|} \label{line2} \\
            \leq &  \frac{\gamma_1}{n+1}\|Q^*-Q_0\| + \frac{n\gamma_1}{n+1}\|Q^*-Q_n\|+\frac{1}{n+1}\|M_n\| \label{line3}\\
            \leq & \frac{2\gamma_1}{n+1}V_{\max}+\frac{n\gamma_1}{n+1}\bigg{[}\frac{2\gamma_1 \beta_1 V_{\max}}{n} \nonumber\\ 
            & \hspace{2cm}+\frac{1}{n} \sum_{k=1}^{n}\gamma_1^{n-k}\|M_{k-1}\|\bigg{]} +\frac{1}{n+1}\|M_n\|\nonumber\\
            &=\frac{2\gamma_1 \beta_1 V_{\max}}{n+1}+\frac{1}{n+1} \sum_{k=1}^{n+1}\gamma_1^{n+1-k}\|M_{k-1}\|.\nonumber
    \end{align}
    Here equation \eqref{line2} is obtained from  \eqref{line1} using $H^{w}Q^*=Q^*$ and equation \eqref{line3} is obtained by noting that $H^{w}$ is a contraction.
    This proves the lemma for all $n \geq 1$.
\end{proof}
\end{lemma}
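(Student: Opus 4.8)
The plan is to prove the bound by induction on $n$, using Lemma~\ref{lemma2} to supply the closed form $Q_n=\frac{1}{n}(H^w Q_0 + (n-1) H^w Q_{n-1} - M_{n-1})$ at each index. Throughout I will rely on three facts established earlier: the fixed-point identity $H^w Q^* = Q^*$, the $\gamma_1$-contraction of $H^w$ in the max-norm, and the a priori bound $\|Q^*\|,\|Q_0\| \leq V_{\max}$, which gives $\|Q^*-Q_0\| \leq 2V_{\max}$. I will also use repeatedly that $\beta_1=\frac{1}{1-\gamma_1}\geq 1$.

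For the base case $n=1$, Lemma~\ref{lemma2} reduces to $Q_1 = H^w Q_0 - M_0$, since the $(n-1)H^w Q_{n-1}$ term vanishes. Substituting $Q^* = H^w Q^*$ and then applying the triangle inequality followed by the contraction property gives
\[
\|Q^*-Q_1\| = \|H^w Q^* - H^w Q_0 + M_0\| \leq \gamma_1\|Q^*-Q_0\| + \|M_0\| \leq 2\gamma_1 V_{\max} + \|M_0\|.
\]
Because $\beta_1\geq 1$, this is at most $2\gamma_1\beta_1 V_{\max} + \|M_0\|$, which is exactly the claimed right-hand side at $n=1$.

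For the inductive step I assume the bound at $n$ and invoke Lemma~\ref{lemma2} at index $n+1$ to write $Q_{n+1}=\frac{1}{n+1}(H^w Q_0 + n H^w Q_n - M_n)$. Splitting $Q^*=\frac{1}{n+1}\big((n+1)Q^*\big)$ and inserting $H^w$ on each copy via $Q^*=H^w Q^*$, I obtain the decomposition
\[
Q^* - Q_{n+1} = \frac{1}{n+1}(H^w Q^* - H^w Q_0) + \frac{n}{n+1}(H^w Q^* - H^w Q_n) + \frac{1}{n+1}M_n.
\]
Taking max-norms, using the triangle inequality, and applying the contraction of $H^w$ yields $\|Q^*-Q_{n+1}\| \leq \frac{\gamma_1}{n+1}\|Q^*-Q_0\| + \frac{n\gamma_1}{n+1}\|Q^*-Q_n\| + \frac{1}{n+1}\|M_n\|$. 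I bound the first term by $\frac{2\gamma_1 V_{\max}}{n+1}$ and substitute the induction hypothesis into the second. The martingale contributions then reassemble cleanly: the factor $\frac{\gamma_1}{n+1}$ multiplied into the inherited sum $\sum_{k=1}^{n}\gamma_1^{n-k}\|M_{k-1}\|$ shifts each exponent up by one to give $\sum_{k=1}^{n}\gamma_1^{n+1-k}\|M_{k-1}\|$, while the leftover $\frac{1}{n+1}\|M_n\|$ supplies precisely the missing $k=n+1$ term, producing the full sum $\frac{1}{n+1}\sum_{k=1}^{n+1}\gamma_1^{n+1-k}\|M_{k-1}\|$.

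The step I expect to be the crux is verifying that the deterministic $V_{\max}$ contributions collapse to the stated coefficient. After substitution these contributions total $\frac{2\gamma_1 V_{\max}}{n+1}$ (from $\|Q^*-Q_0\|$) plus $\frac{2\gamma_1^2\beta_1 V_{\max}}{n+1}$ (from the $V_{\max}$ part of the induction hypothesis), i.e. $\frac{2\gamma_1 V_{\max}}{n+1}(1+\gamma_1\beta_1)$. The entire reduction hinges on the algebraic identity $1+\gamma_1\beta_1 = \beta_1$, which is immediate from $\beta_1=\frac{1}{1-\gamma_1}$: indeed $\gamma_1\beta_1=\frac{\gamma_1}{1-\gamma_1}$, so $1+\gamma_1\beta_1=\frac{1-\gamma_1+\gamma_1}{1-\gamma_1}=\frac{1}{1-\gamma_1}=\beta_1$. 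This collapses the coefficient to exactly $\frac{2\gamma_1\beta_1 V_{\max}}{n+1}$, and combining it with the reassembled martingale sum closes the induction and establishes the lemma for all $n\geq 1$.
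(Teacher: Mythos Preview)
Your proof is correct and follows essentially the same route as the paper: induction on $n$, with the base case handled via Lemma~\ref{lemma2} at $n=1$ (equivalently $Q_1=H^w_0 Q_0$) and the inductive step via the decomposition $Q^*-Q_{n+1}=\frac{1}{n+1}(H^wQ^*-H^wQ_0)+\frac{n}{n+1}(H^wQ^*-H^wQ_n)+\frac{1}{n+1}M_n$, followed by contraction and substitution of the hypothesis. Your explicit verification of the identity $1+\gamma_1\beta_1=\beta_1$ is exactly the step that the paper leaves implicit in collapsing the $V_{\max}$ terms.
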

We use the following version \cite{cesa2006prediction} of the Maximal Azuma-Hoeffding inequality to derive a bound on the sequence $\|M_k\|$.
\begin{azuma}
    Let $V_1,V_2,...,$ be a martingale difference sequence with respect to some sequence $X_1,X_2,...,$ such that $V_i$ is uniformly bounded by $B$ $\forall i$. If $S_n=\sum_{k=1}^n V_k$, then for any $\epsilon >0$, 
    \begin{equation*}\label{azuma1}
        \mathrm{P}\left( \max_{1\leq n\leq N} S_n > \epsilon\right) \leq \exp\left(\frac{-\epsilon^2}{2NB^2}\right).
    \end{equation*}
\end{azuma}
To apply this result to the sequence $\{m_n(i,a)\}$, we first need to bound the terms in this sequence. This bound is obtained as a corollary of the following lemma.
\begin{lemma}
    Let $\|Q_0\|\leq V_{\max}$. Then $\|D_n[Q_n,Q_{n-1}]\|\leq V_{\max}$ $\forall n\geq 0$.
\begin{proof}
    We prove the lemma by induction.
    For $n=0$ we have,
    \begin{align*}
            & ~ \|D_0[Q_0,Q_{-1}]\| \\
            &\leq w\|R\|+\gamma_1\|\mathcal{M}Q_{-1}\| \leq w R_{\max}+(1-w+\gamma w)V_{\max}\\
            &=w(R_{\max}+\gamma V_{\max}) +(1-w) V_{\max} =V_{\max}.\\
    \end{align*}
Assume that the bound holds for $n$. Now,
\begin{align}
        & \|D_{n+1}[Q_{n+1},Q_{n}]\| \nonumber \\ 
        &\leq w\|R\|+\gamma_1\|(n+1)\mathcal{M}Q_{n+1}-n\mathcal{M}Q_n\| \nonumber \\
        &=w\|R\|+\gamma_1\bigg{\|}(n+1) \nonumber \\
        &\mathcal{M}\left(\frac{n}{n+1}Q_n+\frac{1}{n+1}D_n[Q_n,Q_{n-1}]\right)-n\mathcal{M}Q_n\bigg{\|} \nonumber\\
        &= w\|R\|+\gamma_1\bigg{\|} \mathcal{M}\left(nQ_n+D_n[Q_n,Q_{n-1}]\right)-\mathcal{M}\left(nQ_n\right)\bigg{\|} \nonumber\\
        & \leq w\|R\|+\gamma_1\bigg{\|} \mathcal{M}\left(nQ_n+D_n[Q_n,Q_{n-1}] - nQ_n\right)\bigg{\|}\label{stage2}\\
        & = w\|R\| + \gamma_1 \big{\|}\mathcal{M} D_n[Q_n,Q_{n-1}]\big{\|} \nonumber \\
        & \leq w\|R\| + \gamma_1 \big{\|}D_n[Q_n,Q_{n-1}]\big{\|} \label{stage4} \\
        &\leq w R_{\max}+\gamma_1 V_{\max} =V_{\max}\nonumber
\end{align}
Note the use of the inequality $\vert \max\{a\}-\max\{b\}\vert \leq \vert \max\{a-b\}\vert$ for all vectors $a,b$  
to obtain \eqref{stage2}, which is shown in \cite{kamanchi2019successive}. Equation \eqref{stage4} follows from the definition of $\mathcal{M}$. 
Thus, by induction, the bound holds for all $n\geq0$.
\end{proof}
\begin{corollary}
    $m_n(i,a)= \mathbb{E}\big{[}D_n[Q_n,Q_{n-1}](i,a)\vert \mathcal{F}_{n-1}\big{]}-D_n[Q_n,Q_{n-1}](i,a) \leq 2V_{\max}$ $\forall (i,a)$ and $\forall n\geq 0$.
\end{corollary}
\begin{corollary}[Stability of GSQL]
    $\|Q_n\|\leq V_{\max}$ $\forall n \geq 0$ as $Q_n = \frac{1}{n}\sum_{k=0}^{n-1}D_k[Q_k,Q_{k-1}]$.
\end{corollary}
\end{lemma}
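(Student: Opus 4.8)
The plan is to prove the two assertions of the corollary in order: first the averaging identity $Q_n = \frac{1}{n}\sum_{k=0}^{n-1} D_k[Q_k,Q_{k-1}]$, and then deduce the uniform bound $\|Q_n\|\leq V_{\max}$ from it together with the preceding lemma.

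\textbf{Step 1 (averaging identity).} I would establish $Q_n = \frac{1}{n}\sum_{k=0}^{n-1} D_k[Q_k,Q_{k-1}]$ for $n\geq 1$ by induction on $n$, starting from the rewritten update rule \eqref{basecase}, i.e. $Q_{n+1} = (1-\alpha_n)Q_n + \alpha_n D_n[Q_n,Q_{n-1}]$ with $\alpha_n = \tfrac{1}{n+1}$. For the base case $n=1$ we have $\alpha_0 = 1$, so $Q_1 = D_0[Q_0,Q_{-1}]$, which is exactly $\frac{1}{1}\sum_{k=0}^{0} D_k[Q_k,Q_{k-1}]$. For the inductive step, substituting $\alpha_n=\tfrac{1}{n+1}$ gives $Q_{n+1} = \tfrac{n}{n+1}Q_n + \tfrac{1}{n+1}D_n[Q_n,Q_{n-1}]$; plugging in the induction hypothesis for $Q_n$ and using $\tfrac{n}{n+1}\cdot\tfrac{1}{n}=\tfrac{1}{n+1}$ makes the running sum extend to $\frac{1}{n+1}\sum_{k=0}^{n}D_k[Q_k,Q_{k-1}]$. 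This is the same telescoping already carried out in Lemma~\ref{lemma2}.

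\textbf{Step 2 (norm bound).} Given the identity, for $n\geq 1$ the triangle inequality for the max-norm yields $\|Q_n\| \leq \frac{1}{n}\sum_{k=0}^{n-1}\|D_k[Q_k,Q_{k-1}]\|$. By the immediately preceding lemma (applicable since $\|Q_0\|\leq V_{\max}$), each summand satisfies $\|D_k[Q_k,Q_{k-1}]\|\leq V_{\max}$, so $\|Q_n\|\leq \frac{1}{n}\cdot n\,V_{\max} = V_{\max}$. The case $n=0$ is covered directly by the hypothesis $\|Q_0\|\leq V_{\max}$.

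The argument is routine and there is no real obstacle: the substantive content—the bound $\|D_n[Q_n,Q_{n-1}]\|\leq V_{\max}$—is already supplied by the preceding lemma, and the corollary is just the observation that $Q_n$ is the Cesàro average of the $D_k$'s. The only points to keep in mind are that the exact running-average form relies specifically on the harmonic step-size $\alpha_n=\tfrac{1}{n+1}$, and that $n=0$ must be treated separately since the average $\frac1n\sum_{k=0}^{n-1}$ is not defined there.
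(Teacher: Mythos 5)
Your two steps are fine as far as they go: the telescoping of the update rule $Q_{n+1}=\tfrac{n}{n+1}Q_n+\tfrac{1}{n+1}D_n[Q_n,Q_{n-1}]$ into the running average $Q_n=\tfrac{1}{n}\sum_{k=0}^{n-1}D_k[Q_k,Q_{k-1}]$ is exactly the justification the paper leaves implicit in the stability corollary, and the triangle-inequality step is correct. The problem is that you have proved only the last corollary. The statement under review is the lemma $\|D_n[Q_n,Q_{n-1}]\|\leq V_{\max}$ itself, together with its two corollaries, and your argument imports that bound from ``the immediately preceding lemma'' --- but there is no such preceding result; that bound is precisely what has to be established here. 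As written, the chain is circular: $\|Q_n\|\leq V_{\max}$ rests on $\|D_k[Q_k,Q_{k-1}]\|\leq V_{\max}$, which is nowhere proved.

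The missing argument is an induction on $n$ interleaved with the very averaging identity you derived. For $n=0$, since $Q_{-1}=Q_0$ one has $D_0[Q_0,Q_{-1}]=H_0^wQ_0$, so $\|D_0\|\leq w\|R\|+\gamma_1\|\mathcal{M}Q_0\|\leq wR_{\max}+(1-w+\gamma w)V_{\max}=w(R_{\max}+\gamma V_{\max})+(1-w)V_{\max}=V_{\max}$, using $V_{\max}=R_{\max}/(1-\gamma)$. For the step, write $D_{n+1}[Q_{n+1},Q_n](i,a)=wR(i,a)+\gamma_1\big((n+1)\mathcal{M}Q_{n+1}-n\mathcal{M}Q_n\big)(j'_{n+1})$, substitute $(n+1)Q_{n+1}=nQ_n+D_n[Q_n,Q_{n-1}]$ (your Step 1 in incremental form) and use positive homogeneity of $\mathcal{M}$ together with the elementary inequality $|\max_a u(a)-\max_a v(a)|\leq\max_a|u(a)-v(a)|$ to get $\|(n+1)\mathcal{M}Q_{n+1}-n\mathcal{M}Q_n\|=\|\mathcal{M}(nQ_n+D_n)-\mathcal{M}(nQ_n)\|\leq\|D_n\|\leq V_{\max}$, whence $\|D_{n+1}\|\leq wR_{\max}+\gamma_1V_{\max}=V_{\max}$. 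The first corollary, $|m_n(i,a)|\leq 2V_{\max}$, then follows by applying the triangle inequality to $D_n$ and its conditional expectation; only after all this does your averaging argument for $\|Q_n\|\leq V_{\max}$ become legitimate.
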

\begin{proof}[\textbf{Proof of Theorem \ref{theorem1}}]
First, we bound the second term in the RHS of Lemma \ref{lemma3} as follows.
\begin{equation*}
    \begin{split}
        \frac{1}{n} \sum_{k=1}^{n}\gamma_1^{n-k}\|M_{k-1}\| &\leq \frac{1}{n} \sum_{k=1}^{n}\gamma_1^{n-k} \max_{1\leq k \leq n} \|M_{k-1}\|\\
        &\leq \frac{\beta_1 \displaystyle\max_{1\leq k \leq n}\|M_{k-1}\|}{n}.
    \end{split}
\end{equation*}
Now, we derive a bound on $\max_{1\leq k \leq n}\|M_{k-1}\|=\max_{(i,a)}\max_{1\leq k \leq n}|M_{k-1}(i,a)|$. For any $(i,a)$, we have,
\begin{align*}
        & \mathbb{P}\left(\max_{1 \leq k \leq n} \vert M_{k-1}(i,a)\vert > \epsilon\right)\\
    \leq & \mathbb{P}\left(\max_{1 \leq k \leq n}  M_{k-1}(i,a) > \epsilon\right)
        + \mathbb{P}\left(\max_{1 \leq k \leq n}  -M_{k-1}(i,a) > \epsilon\right).
\end{align*}
Bounding each term using Azuma-Hoeffding inequality, we get $\mathbb{P}\big{(}\max_{1 \leq k \leq n} \vert M_{k-1}(i,a)\vert > \epsilon\big{)} \leq
2\exp\left(\frac{-\epsilon^2}{8nV_{\max}^2}\right)$. 
Taking a union bound over the state-action space, we get
\begin{equation*}
    \mathbb{P}\left(\max_{1\leq k \leq n}\|M_{k-1}\|>\epsilon \right) \leq 2|S||A|\exp\left(\frac{-\epsilon^2}{8nV_{\max}^2}\right),
\end{equation*}
which can be rewritten as: For any $\delta >0$,
\begin{equation*}
    \mathbb{P}\left(\max_{1\leq k \leq n}\|M_{k-1}\|\leq V_{\max}\sqrt{8n\log\frac{2|S||A|}{\delta}}\right) \geq 1-\delta.
\end{equation*}
Thus, the result from Lemma \ref{lemma3} now becomes a high probability bound, as given below.
\begin{equation*}
    \|Q_n-Q^*\| \leq \frac{2\gamma_1\beta_1 V_{\max}}{n} + 2 \beta_1 V_{\max} \sqrt{\frac{2\log\frac{2|S||A|}{\delta}}{n}}
\end{equation*}
with probability at least $1-\delta$. Theorem \ref{theorem1} follows by taking $n=N$ and using the definitions of $\gamma_1, \beta_1$ and $V_{\max}$. 
\end{proof}
\addtolength{\textheight}{-2cm}
\section{Experiments}
In this section, we experimentally evaluate the performance of the GSQL-$w$ algorithm. We implement two variants of GSQL-$w$ described as follows. The first variant, which is denoted as GSQL1, is the GSQL-$w$ algorithm described in Algorithm \ref{alg:gsql}. The second variant, referred as GSQL2, avoids construction of the modified next-state sample (that needs to be obtained from fast simulation methods described in \cite{nacu2005fast,mossel2005new}) by using a different empirical Bellman operator, which is given by
\begin{multline}
     \bar{H}_n^w Q_{n}(i,a) := wR(i,a) + \gamma w \displaystyle \max_{a \in A}Q_{n}(j_n,a)\\
     +(1-w) \displaystyle \max_{b \in A}Q_{n}(i,b)
\end{multline}
This operator also has the same expected value as the generalized Bellman operator $H^w$ (See equation \eqref{eq:gbo}). All other steps of GSQL2 are the same as that of GSQL1. Note that the finite time bound for GSQL2 is unknown and finding this is an interesting research direction. 

First, we compare GSQL1 and GSQL2 with three other model-free lookup table based reinforcement learning algorithms namely Q-learning, Speedy Q-learning and Double Q-learning \cite{hasselt2010double} on randomly constructed MDPs. Next, we demonstrate the superiority of GSQL over SQL especially for MDPs with high values of discount factor and $P(i|i,a)$. The scalability of the proposed algorithm is evaluated by considering MDPs with state space cardinality as $10,50,100,500$ and $1000$. Next, we fix an MDP and show the comparison between different values of the relaxation parameter $w$ in the GSQL1 algorithm. Our implementation is available here\footnote{\url{https://github.com/indujohniisc/GSQL}}.

\begin{figure*}
	\centering
	\begin{subfigure}[b]{0.3\textwidth}
		\centering
		\includegraphics[width=\textwidth]{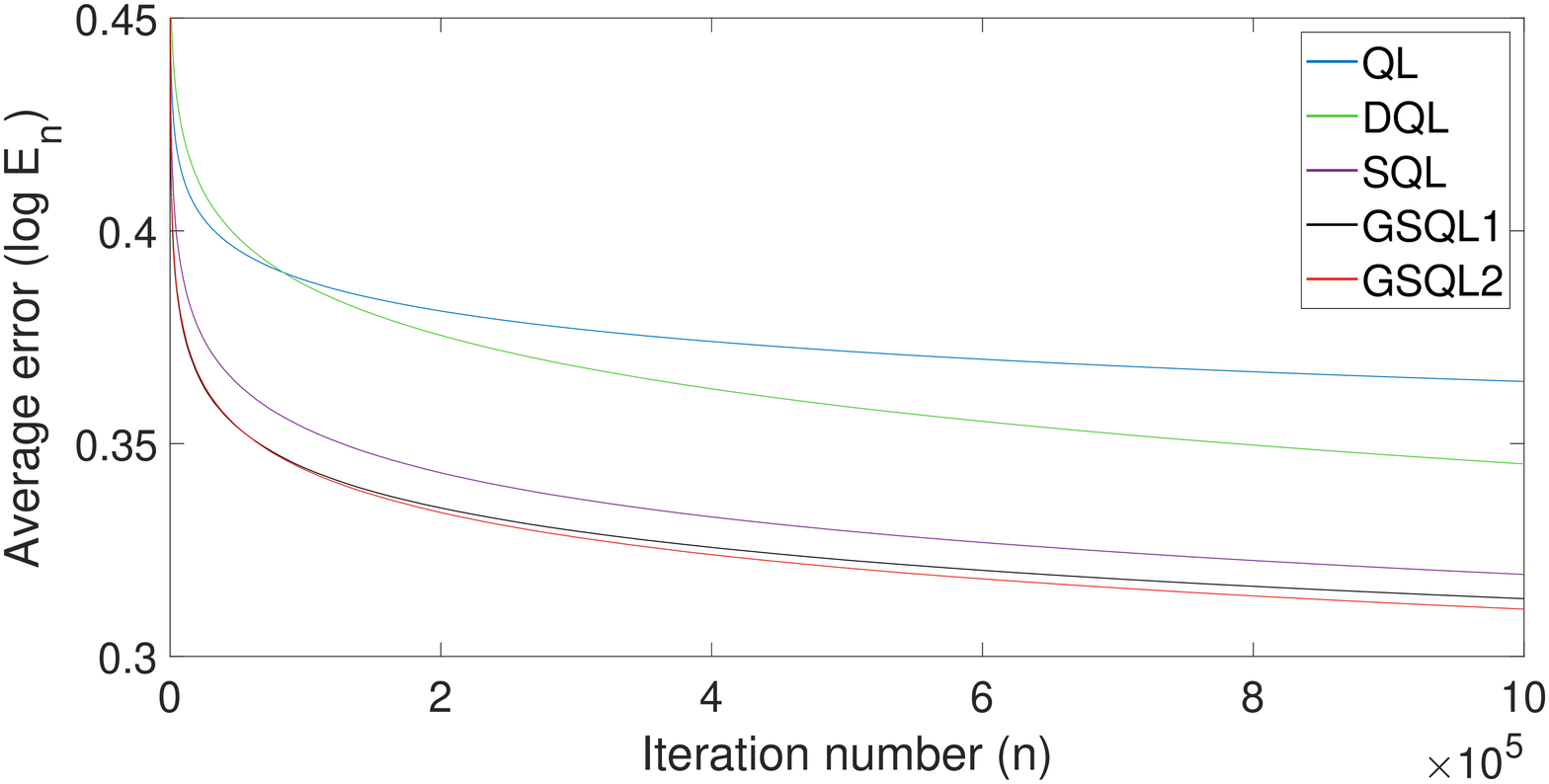}
		\caption{Average error of different RL algorithms with iterations}
		\label{fig:compalgos}
	\end{subfigure}
	\hfill
	\begin{subfigure}[b]{0.3\textwidth}
		\centering
		\includegraphics[width=\textwidth]{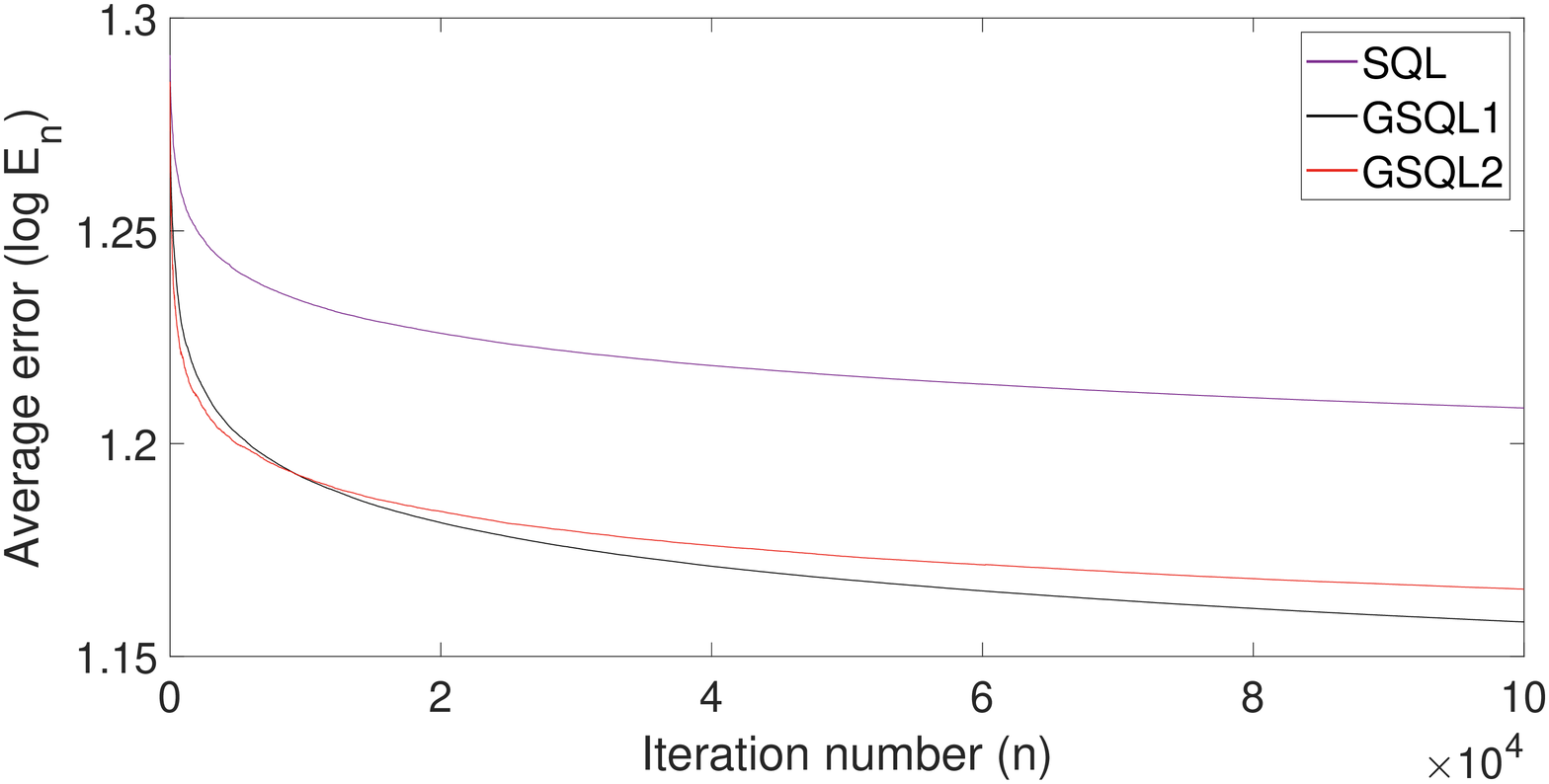}
		\caption{Average error of SQL vs GSQL when $w$ is large}
		\label{fig:compsql}
	\end{subfigure}
	\hfill
	\begin{subfigure}[b]{0.3\textwidth}
		\centering
		\includegraphics[width=\textwidth]{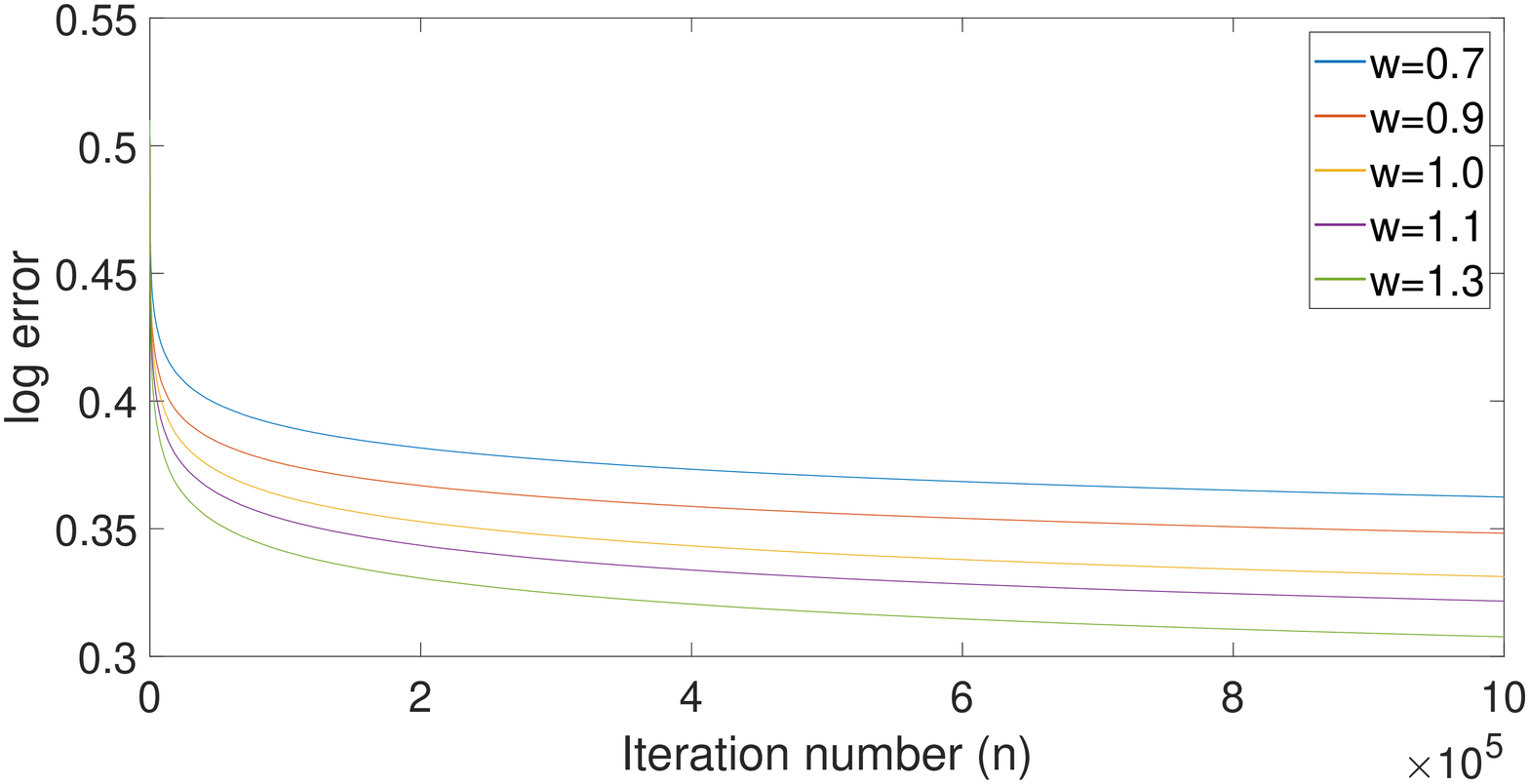}
		\caption{Error of GSQL for different values of $w$}
		\label{fig:compw}
	\end{subfigure}
	\caption{}
\end{figure*}
For comparing GSQL with other RL algorithms, we randomly generate $100$ MDPs with $10$ states and $5$ actions each, that satisfy the condition $P(i|i,a)>0$ $\forall (i,a)$ and have bounded rewards. We run Q-learning (QL), Speedy Q-learning (SQL), Double Q-learning (DQL) and the two variants of GSQL on these MDPs using the same initialization. The discount factor $\gamma$ is chosen as $0.6$. For GSQL1 and GSQL2, we set $w=w^*$ which gives the best finite time bound. 

Figure \ref{fig:compalgos} shows the average error for the different algorithms plotted against the iteration number. Average error is defined as the difference between the optimal value function and its estimate based on the current Q-function given by the algorithm averaged across the 100 MDPs, i.e.,
\begin{center}

  $ \displaystyle
    \begin{aligned} 
    E_n=\frac{1}{100} \sum_{k=1}^{100} \bigg{|}V_k^* -\max_a Q_{k,n}(\cdot,a)\bigg{|},
\end{aligned}
$ 
\end{center}
where $V_k^*$ is the optimal value function of the $k^{th}$ MDP and $Q_{k,n}$ is the Q-function estimate of the $k^{th}$ MDP at the $n^{th}$ iteration. It is seen that the average errors of GSQL1 and GSQL2 decrease with the number of iterations at a faster rate as compared to SQL and the other algorithms. This empirically shows that our algorithms work well for several different MDPs and their superiority over SQL. Further, both variants of GSQL have approximately the same error values which suggests that one or the other could be used in practice.

We also demonstrate the clear advantage of GSQL-$w$ over SQL when the relaxation parameter $w$ is large. For this, we have generated 10 random MDPs with discount factor 0.9 and $P(i|i,a) = 0.9$ $\forall (i,a)$ so that $w^*=5.26$. The result is shown in Figure \ref{fig:compsql}.
The scalability of the GSQL1 algorithm with the number of states is shown in Table \ref{tab:scalability}. The difference in error values of SQL and GSQL after $N(S)=|S| \times 10^3$ iterations is computed for MDPs with state space size $|S| = 10, 50, 100, 500$ and $1000$, for the same values of $w$ and $\gamma$. It is seen that the GSQL algorithm consistently outperforms SQL irrespective of the number of states.

\begin{table}[h]
\begin{tabular}{lcl}
\textbf{No of states}($|S|$) & \textbf{$E^{SQL}_{N(S)}-E^{GSQL}_{N(S)}$} & \textbf{Avg. time per iteration (s)}  \\ \hline
10                    & 1.6240                          & $9.31 \times 10^{-4}$                \\
50                    & 0.6364                         & $9.67 \times 10^{-4}$                \\
100                   & 0.7720                          & $11.99 \times 10^{-4}$               \\
500                   & 0.4487                          & $45.01 \times 10^{-4}$               \\
1000                  & 0.2981                          & $85.99 \times 10^{-4}$              
\end{tabular}
\caption{Scalability of GSQL algorithm}
\label{tab:scalability}
\end{table}
Next, we run the GSQL1 algorithm for different values of $w$ between $0$ and $w^*$ on a single MDP. The results are shown in Figure \ref{fig:compw}. As expected, higher values of $w$ show better performance.

\section{Conclusion and Future Work}
This paper introduces a generalization of the Speedy Q-learning algorithm using the technique of successive relaxation and derives a PAC bound on its finite time performance. Different cases are discussed based on the value of the relaxation parameter $w$. The algorithm is designed to take advantage of the fact that the contraction factor of the generalized Bellman operator is less than that of the standard Bellman operator for the case $w>1$, so in this case, the bound obtained for the generalized algorithm is better than that of Speedy Q-learning.

The generalized Bellman operator can be used in other reinforcement learning algorithms as well. For example, it has already been applied to Watkins' Q-learning \cite{kamanchi2019successive}. It will be interesting to study the rate of convergence and other properties of the modified algorithms, both theoretically and experimentally. Another interesting direction would be to derive a function approximation version of GSQL to deal with the case of large state and action spaces.


\ifCLASSOPTIONcaptionsoff
  \newpage
\fi
\bibliographystyle{IEEEtran}
\bibliography{References}

\end{document}